\DeclareMathAlphabet{\pazocal}{OMS}{zplm}{m}{n}
\DeclareMathOperator*{\argmax}{argmax}
\DeclareMathOperator*{\argmin}{argmin}
\newcommand{\appropto}{\mathrel{\vcenter{
  \offinterlineskip\halign{\hfil$##$\cr
    \propto\cr\noalign{\kern2pt}\sim\cr\noalign{\kern-2pt}}}}}
\newcommand\given[1][]{\:#1\vert\:}
\newtheorem{definition}{Definition}
\newtheorem{assumption}{Assumption}
\newtheorem{theorem}{Theorem}
\newtheorem{lemma}{Lemma}
\newcommand{\norm}[1]{\left\lVert#1\right\rVert}
\title{\LARGE \bf
Inverse Reinforcement Learning: A Control Lyapunov Approach
}
\author{Samuel Tesfazgi, Armin Lederer and Sandra Hirche
\thanks{All authors are with the Department of Electrical and Computer Engineering, Technical University of Munich, 80333 Munich, Germany
{\tt\small [samuel.tesfazgi,armin.lederer, hirche]@tum.de}}%
}
\begin{document}

\setlength{\textfloatsep}{8pt}
\setlength{\floatsep}{8pt}
\setlength{\abovedisplayskip}{5pt}
\setlength{\belowdisplayskip}{5pt}

\maketitle
\thispagestyle{empty}
\pagestyle{empty}

\begin{abstract}

Inferring the intent of an intelligent agent from demonstrations and subsequently predicting its behavior, is a critical task in many collaborative settings. A common approach to solve this problem is the framework of inverse reinforcement learning (IRL), where the observed agent, e.g., a human demonstrator, is assumed to behave according to an intrinsic cost function that reflects its intent and informs its control actions. 
In this work, we reformulate the IRL inference problem to learning control Lyapunov functions (CLF) from demonstrations by exploiting the inverse optimality property, which states that every CLF is also a meaningful value function.
Moreover, the derived CLF formulation directly guarantees stability of inferred control policies. We show the flexibility of our proposed method by learning from goal-directed movement demonstrations in a continuous environment.
\end{abstract}

\section{INTRODUCTION} \label{sec1}
Autonomous system are increasingly deployed in close proximity and conjunction with humans
, whether it is in healthcare robotics, semi-autonomous driving or manufacturing facilities. In many of these applications it is essential that the automation is able to effectively collaborate with the human partner. To this end, a model of the human's behavior is needed. This model can be used not only to predict the human partner, but also to teach machines complex behaviors, for which the design of a controller is cumbersome. This concept is called imitation learning or programming by demonstrations. Approaches based on this paradigm are advantageous, since preferences can be encoded through the demonstrations and the emulated behavior is easily interpretable \cite{Hussein17}. \looseness =-1

However, imitation learning is in general more difficult than just replicating the demonstrated behavior exactly, since the inferred control policy should ideally generalize well to unknown environments. Therefore, Abbeel and Ng \cite{Abbeel04} propose an indirect imitation learning approach called apprenticeship learning. Here, the agent is modelled to behave optimally with regards to an intrinsic cost function, which can be used to retrieve the agent's control policy. Hence, the imitation learning problem is reformulated to inferring the agent's cost function from demonstrations. This inference problem can be solved using inverse reinforcement learning (IRL), which is sometimes also referred to as inverse optimal control (IOC).\footnote{We consider IRL and IOC to be synonymous and use IRL subsequently.}
In \cite{Ziebart08}, Ziebert et al. expand the IRL framework by introducing the notion of suboptimalities in observed demonstrations through probabilistic models.
More recently, in \cite{Finn16}, neural network parameterizations are used to learn high dimensional nonlinear cost functions. However, the deployment of expressive function approximators has the drawback that it introduces model complexity to an already ill-posed inference problem.
In addition, most state-of-the-art IRL methods need to continuously solve the forward optimal control problem in order to evaluate the generated trajectories under the current parametrization of the cost function, which is computationally expensive in general. Furthermore, no guarantees regarding the convergence behavior of inferred control policies are provided. 

An alternative way of approaching the imitation learning problem are 
dynamical movement primitives (DMP). Here, the provided demonstrations of a goal-directed task are used to learn a dynamical system
capturing 
the observed behavior through its attractor landscape \cite{Ijspeert13}. 
For instance, asymptotically stable dynamical system models for describing robot reaching tasks are learned from demonstrations in \cite{Khansari-Zadeh2014}.
DMP methods generally allow to make strong statements regarding the convergence properties of inferred models, due to their grounding in dynamical systems theory.
Nevertheless, they lag behind the representational richness of IRL approaches. Since in DMP the agent and the system it is acting on are described as one dynamical system, no explicit model of the agent is retrieved and only pure imitation learning is possible.

In this work we propose a novel approach that uses the property of inverse optimality, which asserts that every Lyapunov function is a value function for some meaningful cost, to reformulate the IRL inference problem to learning CLFs from demonstrations. By utilizing tools of stochastic dynamical system theory we are able to provide formal guarantees for the stability of inferred control policies. Additionally, we propose non-parametric regression methods to learn the CLFs, therefore, resulting in flexible and data-driven human behavior models. The remainder of the paper is structured as follows: First the problem statement is introduced in \cref{sec2}, while the reformulation of the IRL problem is presented in \cref{sec4}. Finally, the evaluation of the proposed method follows in \cref{sec5}. \looseness=-1

\section{PROBLEM FORMULATION} \label{sec2}
Consider a discrete-time, control-affine system driven by the nonlinear dynamics\footnote{Notation: 
Lower/upper case bold symbols denote vectors/matrices, 
$\mathbb{R}_+$/$\mathbb{N}_+$ all real/integer positive 
numbers, $\bm{I}_n$ the $n\times n$ identity matrix,
$\|\cdot\|$ the Euclidean norm, and $\pazocal{C}(\pazocal{X})$ the set of continuous functions over a compact set $\pazocal{X}$. $\sqrt{\cdot}$ applied to a matrix means it is applied element-wise.}  
\begin{equation}
	\bm{x}_{t+1} = \bm{f}(\bm{x}_{t}) + \bm{g}(\bm{x}_t)\bm{u}_t, \label{eq:1}
\end{equation}
with continuous states $\bm{x} \in \pazocal{X} \subset \mathbb{R}^n$, where $\pazocal{X}$ is a compact set, and known $\bm{f}\colon \pazocal{X} \to \pazocal{X}$, $\bm{g}\colon \pazocal{X} \to \mathbb{R}^{n\times m}$ and initial condition $\bm{x}_0 \in \pazocal{X}$. The agent acting on the system is assumed to perform continuous control actions $\bm{u}\in \pazocal{U}(\bm{x}) \subseteq \mathbb{R}^m$, such that $\pazocal{U}(\bm{x}) = \left\{ \bm{u}\in\mathbb{R}^m \colon \bm{f}(\bm{x}) + \bm{g}(\bm{x})\bm{u} \in \pazocal{X} \right\}$. Since the agent performing the task is an expert, it is reasonable to assume that trajectories generated by the agent are bounded. Furthermore, \eqref{eq:1} is not a particular restrictive system class, since this structure holds for many mechanical systems. 

Following the optimality principle, the agent chooses its control actions~$\bm{u}_t$ according to a 
cost function with the goal of minimizing the accumulated stage costs $l\colon \pazocal{X}\!\!\times\pazocal{U}\! \to\! \mathbb{R}$ over time. Thereby, the agent employs the optimal policy \looseness=-1
\begin{subequations}
    \begin{align}
    \label{eq:disc_sum}
    \bm{\pi}^\ast(\bm{x}) = &\argmin_{\bm{\pi}\colon\pazocal{X}\to \pazocal{U}}~~ \sum\limits_{t=0}^\infty  \gamma^t l(\bm{x}_t,\bm{\pi}(\bm{x}_t)),\qquad \bm{x}_0=\bm{x} \\
    &\mathrm{such~that~} \bm{x}_{t+1} = \bm{f}(\bm{x}_{t}) + \bm{g}(\bm{x}_t)\bm{\pi}(\bm{x}_t),
\end{align}
\end{subequations}
where $\gamma\in \mathbb{R}_+$, $\gamma < 1$ is a discount factor.

While optimal control describes the problem of retrieving the 
optimal policy $\bm{\pi}^\ast$ given the known stage cost~$l$, 
IRL considers the inverse problem. Here, an expert agent demonstrates state-action trajectories $\{(\bm{x}_t, \bm{u}_t)\}_{t=0}^{T}$, with $T\in\mathbb{N}_+$. Subsequently, IRL infers the unknown cost function under which the associated optimal policy best explains the observations.
However, since the behavior of human agents does not exactly reflect the deterministic
optimal policy in practice, e.g., due to motor noise, the unaccounted variations in observed control actions are typically modelled as random perturbations \cite{Ziebart08}.
We consider perturbed policies of the form 
\begin{equation}
	\bm{u}_t = \bm{\tilde{\pi}}(\bm{x}_t) \coloneqq \bm{\pi}^\ast(\bm{x}_t) + \sqrt{\bm{\varSigma}(\bm{x}_t)}\bm{\omega}_t, \label{eq:4} 
\end{equation}
where the perturbations are described by the independent and identically distributed random variable $\bm{\omega}$ and state-dependent covariance $\bm{\varSigma}(\bm{x})\in\mathbb{R}^{m\times m}$. The perturbations $\bm{\omega}$ are generated according to a zero-mean truncated Normal distribution, where the truncation is chosen such that $\bm{\tilde{\pi}}(\bm{x}) = \bm{u}\in \pazocal{U}(\bm{x})$ remains true.
Thereby, the closed-loop dynamics 
\begin{equation}
	\bm{x}_{t+1} = \bar{\bm{f}}(\bm{x}_{t}) + \bm{g}(\bm{x}_t)\sqrt{\bm{\varSigma}(\bm{x}_t)}\bm{\omega}_t, \label{eq:5}
\end{equation}
with $\!\bar{\bm{f}}(\bm{x}_{t}) \!\coloneqq\!  \bm{f}(\bm{x}_{t}) \!+\! \bm{g}(\bm{x}_t)\bm{\pi}^\ast(\bm{x}_{t})$, become stochastic.

The following assumptions are made for the closed-loop dynamics and the perturbed policy:
\begin{assumption}
\label{as:unique}
The agent performs a goal-directed task, where the goal is defined by a unique target state ${\bm{x}^\ast\in \pazocal{X}}$.
\end{assumption}

\begin{assumption}
\label{as:perturb}
The variance in \eqref{eq:4} vanishes at the target; $\lim_{\bm{x}_t \to \bm{x}^\ast} \bm{\varSigma}(\bm{x}_t) = \bm{0}$, such that $\bm{\tilde{\pi}}(\bm{x}^\ast) = \bm{\pi}^\ast(\bm{x}^\ast)$.
\end{assumption}

Intuitively, \cref{as:perturb} states that the agent should act more deterministic close to the target $\bm{x}^\ast$, which is necessary for task completion according to \cref{as:unique}.
Since the target is unique, $\bm{x}^\ast$ coincides with the minimization of the agent's cost function.
Therefore, any agent that successfully performs the task has to act on the system such that it asymptotically converges to the desired final state $\bm{x}^\ast$. This has to hold true regardless of the stochasticity due to random perturbations in \eqref{eq:5}. In order to formalize this property, 
we introduce the following concept of stability.\looseness=-1
\begin{definition}[\cite{Li2013}]
\label{def:stab}
    A system \eqref{eq:5} has an 
    asymptotically stable equilibrium $\bm{x}^*$ on the set $\pazocal{X}$ in probability if
    \begin{enumerate}
        \item for all $\epsilon\! >\! 0$, $d \!>\! 0$, there exist  $\delta \!>\! 0$, $t_0\!\geq\! 0$ such that $\norm{\bm{x}_0 \!-\! \bm{x}^*} \!<\! \delta$ implies $P\left\{\norm{\bm{x}_t \!-\! \bm{x}^*} \!<\! d \right\} \geq 1\!-\!\epsilon,$ $\forall t \!\geq\! t_0.$
    \item 
    $P\left\{\lim_{t \to \infty}\norm{\bm{x}_t \!-\! \bm{x}^*} \!=\! 0\right\} \!=\! 1$ for all $\bm{x}_0\in\pazocal{X}$.
    \end{enumerate}
\end{definition}
Intuitively, \cref{def:stab} is the probabilistic analogue to the classical stability definition in the sense of Lyapunov~\cite{Khalil2002}. Therefore, we subsequently refer to this 
property as asymptotic stability.
Consequently, using \cref{as:unique} and 2, we can express the asymptotic minimization of the objective despite of stochasticity as follows.
\begin{assumption}
\label{as:stab}
Policy \eqref{eq:4} renders system \eqref{eq:1} asymptotically stable at the target state 
$\bm{x}^*$.
\end{assumption}

Based on this assumption, we consider the problem of determining 
the value function
\begin{equation}
	V^\ast(\bm{x}) =  \sum_{t=0}^\infty  \gamma^t l(\bm{x}_t, \bm{\pi}^*(\bm{x}_t)),\qquad \bm{x}_0=\bm{x} , \label{eq:2}
\end{equation}
which describes the minimum cost-to-go when starting in a state~$\bm{x}$ and following the optimal policy $\bm{\pi}^*$ thereafter. Due to the 
Bellman equality, the optimal policy $\bm{\pi}^*$ can be defined 
equivalently as \looseness=-1
\begin{align}
    \bm{\pi}^*(\cdot)=\argmin_{\bm{u}}\big[ l(\cdot,\bm{u})+\gamma V^\ast(\bm{f}(\cdot) + \bm{g}(\cdot)\bm{u}) \big].
    \label{eq:pi(V)}
\end{align}
Using this identity, the problem of estimating the stabilizing 
value function can now be formulated as a constrained
functional optimization problem. Here, the posterior of $V$ is maximized by evaluating the alignment of state-visitation probabilities, under the associated optimal policy $\bm{\pi}^*$, with observed trajectories $\{\tau_1,\ldots,\tau_N\}$, where $\tau_n=\{\bm{x}_{t}^{n}\}_{t=0}^{T}$ and $N\in\mathbb{N}_+$. In addition, \cref{as:stab} must hold true, which leads to the following formulation:
\begin{subequations}
\label{eq:funcopt}
    \begin{align}
    \label{eq:funcopt1}
    V^* = &\argmax_{V\in \pazocal{C}(\pazocal{X})} P\{ V\given \tau_1,\ldots,\tau_N \} \\
    &\mathrm{such~that~} \tilde{\bm{\pi}} \mathrm{~asymptotically~stabilizes~\eqref{eq:1}}.\label{eq:funcopt2}
\end{align}
\end{subequations}

From \eqref{eq:funcopt2} follows an additional constraint to the solution~$V^\ast$. This structural constraint holds over the whole solution space and not only at data points, therefore, providing information at unobserved states. Furthermore, any policy derived from $V^\ast$ is guaranteed to be stabilizing by design.

\section{Stability-Certified Inverse Reinforcement Learning} \label{sec4}
In order to infer the value function $V^\ast$ from training data 
$\tau_1,\ldots,\tau_N$, while considering the stability constraint
on the corresponding perturbed optimal policy $\tilde{\bm{\pi}}$, we exploit the 
inverse optimal relationship between value functions and CLFs. 
This allows us to transform the constraint \eqref{eq:funcopt2} into a 
Lyapunov-type constraint on the optimal value function in 
\cref{subsec:Lyap-Value}. By considering the optimal value function
as a control Lyapunov function, we can approximate the posterior maximization \eqref{eq:funcopt1} in a closed-form in \cref{subsec:Likopt}. 
In \cref{subsec:conkern}, the problem is finally cast
as a constrained kernel regression problem in order to efficiently solve
the problem using machine learning techniques.

\subsection{Lyapunov-Constrained Value Function Approximation}\label{subsec:Lyap-Value}
A practical method to ascertain the convergence property as introduced in \cref{def:stab}, without solving the underlying dynamical system equations, is by means of Lyapunov stability theory \cite{Li2013}.
Since the state space of the considered dynamics is bounded to a compact set, the following
relaxation regarding positive definiteness constraints of Lyapunov function candidates can be concluded here.
\begin{lemma}
\label{lem:stab}
Consider a stochastic system of the form \eqref{eq:5}, which generates
trajectories with states $\bm{x}_t$, such that $\bm{x}_0\in\pazocal{X}$
implies $\bm{x}_t\in\pazocal{X}$ almost surely for all $t\in\mathbb{N}_+$ 
and compact sets $\pazocal{X}$. 
If there exists a continuous $W\colon \pazocal{X} \to \mathbb{R}_{+,0}$, such that\looseness=-1
\begin{subequations}
\label{eq:stab_cond}
\begin{align}
	\mathbb{E}\left[W(\bm{x}_{t+1}) \given \bm{x}_t \right] - W(\bm{x}_t) &< 0, \enspace \forall \bm{x}_t\in \pazocal{X} \;\backslash\; \{\bm{x}^{\ast}\} \label{eq:9}\\
	\mathbb{E}\left[W(\bm{x}_{t+1}) \given \bm{x}_t \right] - W(\bm{x}_t) &= 0, \enspace \bm{x}_t= \bm{x}^{\ast}
\end{align}
\end{subequations}
then, the system with equilibrium point $\bm{x}^{\ast}$ is  asymptotically stable in the sense of \cref{def:stab}. 
\end{lemma}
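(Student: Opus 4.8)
The plan is to recognize conditions \eqref{eq:stab_cond} as a supermartingale drift condition and to deploy the discrete-time stochastic Lyapunov machinery underlying \cref{def:stab} and \cite{Li2013}, with the usual positive-definiteness requirement relaxed by exploiting compactness of $\pazocal{X}$. I would first write the expected one-step decrease as $\Delta W(\bm{x}) \coloneqq \mathbb{E}[W(\bm{x}_{t+1})\given \bm{x}_t=\bm{x}]-W(\bm{x})$, which is well defined because the hypothesis $\bm{x}_t\in\pazocal{X}$ almost surely, together with compactness, makes $W$ bounded along trajectories. Since $\bar{\bm{f}}$, $\bm{g}$, $\bm{\varSigma}$ are continuous and $\bm{\omega}$ has a truncated (hence integrable) density, dominated convergence gives $\Delta W\in\pazocal{C}(\pazocal{X})$. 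By \eqref{eq:stab_cond}, $\Delta W(\bm{x})<0$ for $\bm{x}\neq\bm{x}^\ast$ and $\Delta W(\bm{x}^\ast)=0$, and the Markov property yields $\mathbb{E}[W(\bm{x}_{t+1})\given \mathcal{F}_t]\le W(\bm{x}_t)$, so $\{W(\bm{x}_t)\}$ is a nonnegative supermartingale.

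For the attractivity claim (part~2 of \cref{def:stab}), I would apply the supermartingale convergence theorem to obtain $W(\bm{x}_t)\to W_\infty$ almost surely. Telescoping the drift identity $\mathbb{E}[W(\bm{x}_T)]-W(\bm{x}_0)=\sum_{t=0}^{T-1}\mathbb{E}[\Delta W(\bm{x}_t)]$ and using $W\ge 0$ gives $\sum_{t=0}^{\infty}\mathbb{E}[-\Delta W(\bm{x}_t)]\le W(\bm{x}_0)<\infty$, whence $\sum_t(-\Delta W(\bm{x}_t))<\infty$ and therefore $\Delta W(\bm{x}_t)\to 0$ almost surely. Here compactness does the essential work: $-\Delta W$ is continuous, nonnegative, and vanishes only at $\bm{x}^\ast$, so on the compact $\pazocal{X}$ the convergence $\Delta W(\bm{x}_t)\to 0$ forces every limit point of $\{\bm{x}_t\}$ into the zero set $\{\bm{x}^\ast\}$, i.e.\ $\bm{x}_t\to\bm{x}^\ast$ almost surely, for any $\bm{x}_0\in\pazocal{X}$. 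This is exactly where positive definiteness of $W$ can be dropped: the decrease condition itself supplies a function $-\Delta W$ that is positive definite about $\bm{x}^\ast$, and compactness upgrades pointwise strictness to the coercivity $\max_{\bm{x}\in\pazocal{X}\setminus B_r(\bm{x}^\ast)}\Delta W(\bm{x})=-c_r<0$ for every $r>0$.

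For stability in probability (part~1), I would combine Doob's maximal inequality for the nonnegative supermartingale, $P\{\sup_{t}W(\bm{x}_t)\ge\lambda\}\le W(\bm{x}_0)/\lambda$, with the continuity of $W$ at $\bm{x}^\ast$ to propagate small initial deviations, and with the almost sure convergence just established, which implies convergence in probability and hence the uniform-in-$t\ge t_0$ bound $P\{\norm{\bm{x}_t-\bm{x}^\ast}<d\}\ge 1-\epsilon$. Choosing $\delta$ so small that $\norm{\bm{x}_0-\bm{x}^\ast}<\delta$ keeps $W(\bm{x}_0)$ below the relevant sublevel threshold of $W$ around $\bm{x}^\ast$ (using that $\bm{x}^\ast$ is a deterministic fixed point by \cref{as:perturb}) then delivers the $\epsilon$--$\delta$ statement.

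The step I expect to be the main obstacle is this last one: making part~1 genuinely uniform while working only with a nonnegative, not positive-definite, $W$. The maximal inequality controls $\sup_t W(\bm{x}_t)$ through $W(\bm{x}_0)$, but translating a small value of $W$ back into a small deviation $\norm{\bm{x}_t-\bm{x}^\ast}$ is precisely what positive definiteness normally guarantees. I would therefore have to argue that, on the compact $\pazocal{X}$ and near $\bm{x}^\ast$, the drift condition together with \cref{as:perturb} forces the pertinent sublevel sets of $W$ to shrink onto $\bm{x}^\ast$, thereby closing the gap left by the relaxed positive-definiteness hypothesis.
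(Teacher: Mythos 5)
Your route is genuinely different from the paper's: the paper does not argue from first principles but constructs the shifted function $\hat{W}=W-\min_{\bm{x}\in\pazocal{X}}W(\bm{x})$, proves it is positive definite, and then invokes the stochastic Lyapunov theorem of \cite{Li2013} wholesale. Your self-contained supermartingale argument for part~2 of \cref{def:stab} is sound and arguably more informative: telescoping the drift gives $\sum_t\mathbb{E}[-\Delta W(\bm{x}_t)]\le W(\bm{x}_0)<\infty$, hence $\Delta W(\bm{x}_t)\to 0$ almost surely, and since $-\Delta W$ is continuous, nonnegative, and vanishes only at $\bm{x}^*$, compactness of $\pazocal{X}$ yields the coercivity bound that forces $\bm{x}_t\to\bm{x}^*$ almost surely. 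That LaSalle-type step correctly replaces positive definiteness of $W$ for the attractivity claim.

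The genuine gap is in part~1, and you have named it yourself without closing it: Doob's maximal inequality controls $\sup_t W(\bm{x}_t)$ (or rather $\sup_t\hat{W}(\bm{x}_t)$ --- applied to $W$ directly it is vacuous whenever $\min_{\pazocal{X}}W>0$), but translating a small value of $W-\min W$ into a small deviation $\norm{\bm{x}_t-\bm{x}^*}$ requires that the sublevel sets of $W$ near its minimum collapse onto $\bm{x}^*$, i.e.\ that $\bm{x}^*$ is the \emph{unique} minimizer of $W$ on $\pazocal{X}$. This is precisely the one nontrivial step the paper supplies, and it does not follow from continuity or from \cref{as:perturb}; it follows from the drift condition itself by contradiction. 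Concretely: the set of minimizers of $W$ is nonempty by compactness and continuity; if some $\hat{\bm{x}}\neq\bm{x}^*$ were a minimizer, then $W(\bm{x}_{t+1})\ge W(\hat{\bm{x}})$ pointwise for the successor state, so $\mathbb{E}\left[W(\bm{x}_{t+1})\given\bm{x}_t=\hat{\bm{x}}\right]-W(\hat{\bm{x}})\ge 0$, contradicting \eqref{eq:9}. Hence $\bm{x}^*$ is the unique minimizer, $\hat{W}$ is positive definite, its sublevel sets shrink onto $\bm{x}^*$ by compactness, and your $\epsilon$--$\delta$ argument via the maximal inequality then goes through. Without this one line, part~1 of your proof does not close; with it, your argument is complete and in fact reproves the cited result of \cite{Li2013} rather than quoting it.
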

\begin{proof}
We prove this lemma by showing that $\hat{W}=W-\min_{\bm{x}\in\pazocal{X}}W(\bm{x})$ is a Lyapunov function, which implies asymptotic stability in probability \cite{Li2013}. The decrease of $\hat{W}$ along system trajectories is ensured by \eqref{eq:9}, such that it remains to show that $\hat{W}$ is positive definite. This is identical to proving that $\bm{x}^*$ is the only minimizer of $W$ on $\pazocal{X}$. In order to show this 
let $\pazocal{M}=\{ \bm{x}\in\pazocal{X}: \min_{\bm{x}'\in\pazocal{X}}W(\bm{x}')=W(\bm{x}) \}$
be the set of minimizers of $W$. Assume that there
exists a $\hat{\bm{x}}\neq \bm{x}^*$, $\hat{\bm{x}}\in\pazocal{M}$. 
This implies that
\begin{align}
    \mathbb{E} [W(\bar{\bm{f}}(\bm{\hat{x}}) + \bm{g}(\bm{\hat{x}})\sqrt{\bm{\varSigma}(\bm{\hat{x}})}\bm{\omega})]-W(\hat{\bm{x}})\geq 0
\end{align}
by definition of the set $\pazocal{M}$. However, this is 
a contradiction of \eqref{eq:9}, such that $\bm{x}^*$ 
is the unique minimizer of $W$
. 
Therefore, $\hat{W}$ is 
positive definite, which concludes the proof.
\end{proof}

A major strength of Lyapunov stability theory is the existence of 
converse theorems, i.e., under the assumption of stability, a Lyapunov
function is guaranteed to exist. Since the Lyapunov-like function
$W$ considered in \cref{lem:stab} is merely a shifted Lyapunov function, this property extends. We exploit this together 
with the fact that every Lyapunov function is an optimal 
value function for some meaningful cost \cite{Freeman96}. This so called inverse optimality property
is employed to formulate the original problem \eqref{eq:funcopt}
as a Lyapunov-constrained optimization problem, which is guaranteed
to be feasible. This is shown in the following result.\looseness=-1
\begin{lemma}
The Lyapunov-constrained functional optimization problem
\begin{subequations}
	\label{eq:14}
\begin{align}
	\label{eq:14a}
	V^* = &\argmax_{V\in \pazocal{C}(\pazocal{X})} &&P\{ V\given \tau_1,\ldots,\tau_N \}, \\[3pt]
	\label{eq:14d}
	& \text{ s.t.}&&  \mathbb{E}\left[\Delta V\left(\bm{x}\right) \right]  \;\:< 0,\enspace	\forall \bm{x}\in\pazocal{X}\setminus \{\bm{x}^*\}	\\ 
	\label{eq:14e}
	& &&  \mathbb{E}\left[\Delta V\left(\bm{x}^*\right) \right] = 0,
\end{align}
\end{subequations}
with $\mathbb{E}\left[\Delta V\left(\bm{x}_{t}\right) \right] = \mathbb{E}\left[V(\bm{x}_{t+1}) \given \bm{x}_t \right] - V(\bm{x}_t)$ is feasible.
\end{lemma}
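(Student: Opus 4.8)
The plan is to establish feasibility by exhibiting a single admissible candidate, i.e., one continuous function $V \in \pazocal{C}(\pazocal{X})$ that satisfies the constraints \eqref{eq:14d} and \eqref{eq:14e}. The key observation is that these two constraints are \emph{identical} to the decrease conditions \eqref{eq:stab_cond} of \cref{lem:stab}, with the generic certificate $W$ replaced by $V$. Hence it suffices to construct one continuous function fulfilling \eqref{eq:9} for the closed-loop dynamics \eqref{eq:5}, and the functional optimization problem \eqref{eq:14} will have a nonempty feasible set.

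First I would invoke \cref{as:stab}, which guarantees that the perturbed policy \eqref{eq:4} renders the closed-loop system \eqref{eq:5} asymptotically stable at $\bm{x}^*$ in the sense of \cref{def:stab}. The central step is then to apply a converse Lyapunov theorem for stochastic stability in probability (as available in the setting of \cite{Li2013}): given asymptotic stability of the equilibrium $\bm{x}^*$ on $\pazocal{X}$, there exists a continuous function $W\colon \pazocal{X} \to \mathbb{R}_{+,0}$ whose one-step expected increment $\mathbb{E}[W(\bm{x}_{t+1}) \given \bm{x}_t] - W(\bm{x}_t)$ is strictly negative on $\pazocal{X}\setminus\{\bm{x}^*\}$ and vanishes at $\bm{x}^*$. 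Setting $V = W$ immediately yields a point satisfying \eqref{eq:14d}--\eqref{eq:14e}, so the feasible set is nonempty and the problem is feasible. Note that positive definiteness of $V$ is \emph{not} required by the constraints, only the decrease property, which is exactly what the converse theorem supplies; this mirrors the shift $\hat{W}=W-\min_{\bm{x}\in\pazocal{X}}W(\bm{x})$ used in the proof of \cref{lem:stab}.

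To close the loop with the value-function interpretation, I would invoke the inverse optimality property \cite{Freeman96}: the (shifted) Lyapunov function supplied by the converse theorem is itself an optimal value function for some meaningful stage cost, which confirms that the admissible $V$ constructed above is consistent with the original formulation \eqref{eq:funcopt} rather than being an artificial certificate. Feasibility of \eqref{eq:funcopt} and \eqref{eq:14} are thereby linked through \cref{lem:stab}, whose decrease conditions are sufficient for the stability constraint \eqref{eq:funcopt2} (as already proven) and, via the converse theorem, also necessary.

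The step I expect to be the main obstacle is verifying the hypotheses of the converse theorem in this specific discrete-time, state-dependent-noise setting. One must ensure that asymptotic stability in probability on the compact, almost surely invariant set $\pazocal{X}$ (rather than on all of $\mathbb{R}^n$) yields a certificate that is genuinely continuous on $\pazocal{X}$ and compatible with the truncated-Normal perturbation model of \eqref{eq:4}. Particular care is needed near the equilibrium, where \cref{as:perturb} forces the noise covariance to vanish and the dynamics become effectively deterministic: there I would confirm that the converse construction still delivers the strict inequality in \eqref{eq:14d} on all of $\pazocal{X}\setminus\{\bm{x}^*\}$ together with equality at $\bm{x}^*$, so that the constructed $V$ is admissible throughout.
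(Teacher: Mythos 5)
Your proposal follows essentially the same route as the paper's own proof: invoke \cref{as:stab} together with a converse Lyapunov theorem for stochastic asymptotic stability to obtain a continuous certificate satisfying \eqref{eq:14d}--\eqref{eq:14e}, and then use the inverse optimality property of \cite{Freeman96} to connect this feasible point back to the value-function interpretation. Your additional remarks on the hypotheses of the converse theorem and on the shift argument from \cref{lem:stab} are sensible elaborations but do not change the argument.
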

\begin{proof}
Since the perturbed optimal policy $\bm{\tilde{\pi}}$ asymptotically stabilizes system~\eqref{eq:1} in the sense of \cref{def:stab}, the converse Lyapunov theorem guarantees the
existence of a Lyapunov function satisfying constraints \eqref{eq:14d}-\eqref{eq:14e} \cite{Teel2014}. 
Moreover, every Lyapunov function also resembles an optimal 
value function for some meaningful cost $l$ 
\cite{Freeman96}, which makes the Lyapunov function a valid solution of \eqref{eq:14}. \looseness=-1
\end{proof}
Note that the realization of the stability constraint \eqref{eq:funcopt2} through the constraints \eqref{eq:14d} and \eqref{eq:14e} does not pose a restriction to 
the solution space, since every Lyapunov-like function is an
optimal value function for a continuum of stage costs with
different optimal control laws. Hence, the considered 
stability constraints still allow to infer different agent 
preferences modelled by the intrinsic costs~$l$.

\subsection{Closed-Form Likelihood Expression}\label{subsec:Likopt}

While the focus of the previous section lies on deriving a feasible
expression for the stability constraint, we deal with the problem
of maximizing the posterior \eqref{eq:funcopt1} in this section. 
In order to solve this 
problem, we follow a Bayesian approach, which directly leads to 
the proportional relationship
\begin{align}
    P\{V\given \tau_1,\ldots,\tau_N\}\propto P\{\tau_1,\ldots,\tau_N\given V  \} P\{V\},
\end{align}
where $P\{V\}$ denotes the prior probability distribution over
value functions $V$, which is a design choice. Since the
trajectories $\tau_1,\ldots,\tau_N$ are generated independently 
using the perturbed optimal policy $\tilde{\bm{\pi}}$, they are 
conditionally independent given the optimal policy $\bm{\pi}^\ast$. 
Due to \eqref{eq:pi(V)}, this implies 
\begin{align}
    P\{\tau_1,\ldots,\tau_N\given V  \} = P\{\tau_1\given V \}\cdots P\{\tau_N\given V  \}.
\end{align}
Similarly, observed states $\bm{x}_t^n$ along a trajectory $\tau_n$
are conditionally Markovian given the optimal policy $\bm{\pi}^\ast$. Thus, it follows by the same argument 
as before that 
\begin{align}
    P\{\tau_n\given V  \} = P\{\bm{x}_0^n\given \tau_n\} \prod_{t=1}^T
    P\{\bm{x}_t^n\given V, \bm{x}_{t-1}^n \},
\end{align}
where $P\{\bm{x}_0\given \tau_n\}=P\{\bm{x}_0\}$ is the prior initial state distribution. Since this prior is independent of $V$, we have
\begin{align}
    P\{\tau_n\given V  \} \propto  \prod_{t=1}^T
    P\{\bm{x}_t^n\given V, \bm{x}_{t-1}^n \}.
\end{align}
Because the agent behaves optimally, it follows that for each of these 
probabilities, the next state $\bm{x}_{t}^n$ generated by the 
policy $\bm{\tilde{\pi}}$ applied to the state $\bm{x}_{t-1}^n$ can 
be determined using~\eqref{eq:5}. Since the considered perturbations $\bm{\omega}$ to the optimal policy \eqref{eq:4} are truncated normally distributed, a closed form expression for the probabilities can be obtained. For improved readability, we assume $\omega\sim \pazocal{N}(\bm{0},\bm{I}_n)$, which is not particular restrictive in practice, since the deterministic part of \eqref{eq:4} dominates the perturbations in general. Accordingly, resulting in the following expression for the probabilities
\begin{align}
    &P\{\bm{x}_t^n\given V,\bm{x}_{t-1}^n \} =\\ 
    &\pazocal{N}(\bm{x}_t^n\given \bm{f}(\bm{x}_{t-1}^n)+\bm{g}(\bm{x}_{t-1}^n)\bm{\pi}^*(\bm{x}_{t-1}^n), \bm{g}^\intercal(\bm{x}_{t-1}^n)\bm{\varSigma}(\bm{x}_{t-1}^n)\bm{g}(\bm{x}_{t-1}^n))\nonumber,
\end{align}
where $\bm{\pi}^*$ is defined in \eqref{eq:pi(V)}. By combining 
all these equalities, we obtain the log-likelihood
\begin{align}
\label{eq:exact_loglik}
    \!\log(P\{V\!\given \tau_1,\ldots,\tau_N\})\!\propto\! \log(P\{V\})\!-\!\!\sum_{n\!=\!1}^N\sum_{t\!=\!1}^T (\bm{e}_t^n)^\intercal \bm{\Gamma}^{n}_{t}\bm{e}_t^n,\!
\end{align}
where \allowdisplaybreaks
\begin{align}
    \bm{e}_t^n&=\bm{x}_t^n-\bm{f}(\bm{x}_{t-1}^n)-\bm{g}(\bm{x}_{t-1}^n)\bm{\pi}^*(\bm{x}_{t-1}^n)\\
    \bm{\Gamma}_t^n&=\left(\bm{g}^\intercal(\bm{x}_{t-1}^n)\bm{\varSigma}(\bm{x}_{t-1}^n)\bm{g}(\bm{x}_{t-1}^n)\right)^{-1}.
    \label{eq:Gamma}
\end{align}
This function measures how 
well the perturbed optimal policy $\bm{\tilde{\pi}}$ approximates the observed demonstrations $\tau_1,\ldots,\tau_N$
of the agent. Thereby, the optimal value function $V^*$ is indirectly 
inferred, as it induces the unperturbed optimal policy $\bm{\pi}^*$ via \eqref{eq:pi(V)}, thus, influencing the loss in \eqref{eq:exact_loglik}. \looseness=-1

However, in order to maximize \eqref{eq:exact_loglik}  the optimal policy $\bm{\pi}^*$ is needed. A closed-form expression for $\bm{\pi}^*$ is generally difficult to obtain and is only available for some problems, e.g., linear systems with quadratic costs.
Although \eqref{eq:exact_loglik} can be practically employed 
in \eqref{eq:funcopt} by approximately solving the optimal control problem to determine the optimal policy $\bm{\pi}^*$, this
approach is computationally demanding in general \cite{Finn16}, particularly 
considering that the stability constraint 
needs to be enforced. 
Therefore, we make use of the fact that we
already know that $V$ is a Lyapunov-like function due to the constraints
\eqref{eq:14d} and \eqref{eq:14e}. Additionally, we know from the inverse optimality property and \cref{as:stab} that there exists a CLF, which is equivalent to $V^*$ \cite{Freeman96}.
This approach allows us
to construct closed-form control laws $\hat{\bm{\pi}}$ for many, more general
system classes, e.g., control-affine systems \cite{Byrnes1993, Bacciotti2001}. 
Hence, instead of employing the relationship between the optimal 
value function $V^*$ and the optimal policy $\bm{\pi}^*$, we 
propose to approximate the inference problem by exploiting 
the closed-form control law $\hat{\bm{\pi}}$ coming along with 
the CLF. 
Analogue to before, the  Lyapunov-like function $V$ is fitted by evaluating the incurred loss under the closed-form control law $\hat{\bm{\pi}}$.
This immediately leads to the approximation \looseness=-1
\begin{align}
\label{eq:approx_loglik}
    \!\log(P\{V\!\given \tau_1,\ldots,\tau_N\})\!\appropto\! \log(P\{V\})\!-\!\!\sum_{n\!=\!1}^N\sum_{t\!=\!1}^T (\hat{\bm{e}}_t^n)^\intercal\bm{\Gamma}^{n}_t\hat{\bm{e}}_t^n,\!
\end{align}
where 
\begin{align}
    \hat{\bm{e}}_t^n=\bm{x}_t^n-\bm{f}(\bm{x}_{t-1}^n)-\bm{g}(\bm{x}_{t-1}^n)\hat{\bm{\pi}}(\bm{x}_{t-1}^n).
    \label{eq:hate}
\end{align}
This log-likelihood still allows 
to infer $V$ indirectly by fitting a policy, 
merely substituting the optimal policy $\bm{\pi}^*$ by the 
closed-form policy $\hat{\bm{\pi}}$. Thereby, optimization 
with the approximate log-likelihood \eqref{eq:approx_loglik} 
yields a function $V$, which reflects the agent's preferences, since the associated policy
$\hat{\bm{\pi}}$ best possibly represents the observed 
demonstrations $\tau_1,\ldots,\tau_N$. Even though the obtained $V$ is not an optimal
value function in general, this formulation allows to 
encode arbitrary agent behaviors in principle, only
limited by the flexibility of the closed-form 
control policy. This flexibility is often sufficient 
to represent arbitrary training data. In the following theorem this 
is exemplarily shown for a time-discrete system with
control-affine structure~\eqref{eq:1}
and policy
\begin{align}\label{eq:clf-pol}
    \hat{\bm{\pi}}(\bm{x}) = -\beta \left[\nabla V(f(\bm{x}))g(\bm{x}) \right]^\intercal
\end{align}
where $\beta > 0$ and \eqref{eq:clf-pol} is taken from \cite{Bacciotti2001}.
\begin{theorem}
    Given a control-affine system and a data set with $N$ trajectories $\tau_n$ generated by an agent with stabilizing deterministic 
    policy $\hat{\bm{\pi}}$, i.e., $\bm{\varSigma}=\bm{0}$. If 
    $\bm{f}(\bm{x})\neq \bm{f}(\bm{x}')$ for all $\bm{x},\bm{x}'\in\pazocal{X}$, $\bm{x}\neq\bm{x}'$, there
    always exists a function $V$ such that the 
    closed-loop control law \eqref{eq:clf-pol} generates the 
    training data.
\end{theorem}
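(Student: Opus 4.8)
The plan is to read ``generates the training data'' as the requirement that the deterministic closed loop $\bm{x}_{t+1}=\bm{f}(\bm{x}_t)+\bm{g}(\bm{x}_t)\hat{\bm{\pi}}(\bm{x}_t)$ reproduce every observed transition, i.e.\ that for each trajectory $\tau_n$ and each $t\in\{0,\dots,T-1\}$ one has $\bm{f}(\bm{x}_t^n)+\bm{g}(\bm{x}_t^n)\hat{\bm{\pi}}(\bm{x}_t^n)=\bm{x}_{t+1}^n$. Writing $\bm{d}_t^n\coloneqq\bm{x}_{t+1}^n-\bm{f}(\bm{x}_t^n)$ and noting that the data were produced by \eqref{eq:1} with $\bm{\varSigma}=\bm{0}$, so that $\bm{d}_t^n=\bm{g}(\bm{x}_t^n)\bm{u}_t^n\in\operatorname{range}(\bm{g}(\bm{x}_t^n))$, the requirement becomes $\bm{g}(\bm{x}_t^n)\hat{\bm{\pi}}(\bm{x}_t^n)=\bm{d}_t^n$. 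The key observation is that the candidate policy \eqref{eq:clf-pol} depends on $V$ only through the single vector $\bm{p}_t^n\coloneqq[\nabla V(\bm{f}(\bm{x}_t^n))]^\intercal$, since $\hat{\bm{\pi}}(\bm{x}_t^n)=-\beta\,\bm{g}(\bm{x}_t^n)^\intercal\bm{p}_t^n$. Thus the entire statement reduces to prescribing the gradient of $V$ at the finitely many image points $\bm{y}_t^n\coloneqq\bm{f}(\bm{x}_t^n)$.

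Next I would establish solvability at each point. Substituting the policy, the matching condition is the linear system $\beta\,\bm{g}(\bm{x}_t^n)\bm{g}(\bm{x}_t^n)^\intercal\bm{p}_t^n=-\bm{d}_t^n$ in the unknown $\bm{p}_t^n$. Because $\operatorname{range}(\bm{g}\bm{g}^\intercal)=\operatorname{range}(\bm{g})$ and $\bm{d}_t^n\in\operatorname{range}(\bm{g}(\bm{x}_t^n))$ by the previous step, the right-hand side always lies in the range of the system matrix, so a solution exists (e.g.\ $\bm{p}_t^n=-\tfrac1\beta(\bm{g}\bm{g}^\intercal)^{+}\bm{d}_t^n$ via the pseudoinverse). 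Hence for every source state an admissible target gradient $\bm{p}_t^n$ can be chosen, independently of whether $\bm{g}$ has full rank.

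The third step is to turn these pointwise gradient prescriptions into a single differentiable $V$, and this is where the hypothesis on $\bm{f}$ enters. The prescriptions conflict only if two source states $\bm{x}_t^n\neq\bm{x}_s^m$ share the same image $\bm{y}_t^n=\bm{y}_s^m$; the assumption $\bm{f}(\bm{x})\neq\bm{f}(\bm{x}')$ for $\bm{x}\neq\bm{x}'$ rules this out, so distinct source states give distinct points $\bm{y}_t^n$, while coincident source states give identical controls (the generating $\hat{\bm{\pi}}$ is deterministic) and hence a consistent single requirement. After discarding duplicates we are left with finitely many distinct points $\bm{y}_1,\dots,\bm{y}_K$ carrying well-defined target gradients $\bm{p}_1,\dots,\bm{p}_K$. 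A $C^1$ function attaining them is then built by localized interpolation: choosing smooth bump functions $\phi_k$ that equal $1$ near $\bm{y}_k$ with pairwise disjoint supports, the function $V(\bm{x})=\sum_{k=1}^K\phi_k(\bm{x})\,\bm{p}_k^\intercal(\bm{x}-\bm{y}_k)$ satisfies $\nabla V(\bm{y}_k)=\bm{p}_k$, which is exactly what is required.

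I expect the main obstacle to be the consistency argument rather than the interpolation: one must argue carefully that no two data points impose contradictory gradient constraints at a common location, and this is precisely the role of the injectivity of $\bm{f}$. The per-point solvability is a routine range/pseudoinverse argument, and once consistency is secured the construction of $V$ by bump functions (or equivalently by any kernel interpolant of the prescribed gradients) is standard. A secondary point worth stating explicitly is that ``generating the data'' only forces the next-state match $\bm{g}\hat{\bm{\pi}}=\bm{d}_t^n$ and not an exact control match $\hat{\bm{\pi}}=\bm{u}_t^n$; the latter need not be solvable when $m>n$, so the proof must commit to the weaker, and correct, trajectory-level requirement.
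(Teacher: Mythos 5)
Your proof is correct and follows essentially the same route as the paper: both reduce ``generates the data'' to the next-state matching condition $\bm{g}(\bm{x}_t^n)\hat{\bm{\pi}}(\bm{x}_t^n)=\bm{g}(\bm{x}_t^n)\bm{u}_t^n$, read this as a prescription of $\nabla V$ at the finitely many points $\bm{f}(\bm{x}_t^n)$, and invoke the injectivity of $\bm{f}$ to guarantee these prescriptions are mutually consistent and hence realizable by a continuous function. Your treatment is in fact slightly more complete than the paper's, which only spells out the full-rank case $(\bm{g}\bm{g}^\intercal)^{-1}$ and asserts existence of the interpolant, whereas you cover rank-deficient $\bm{g}$ via the range/pseudoinverse argument and give an explicit bump-function construction of $V$.
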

\begin{proof}
It it is straightforward to see that the control Lyapunov 
policy \eqref{eq:clf-pol} generates given trajectories $\tau_n$
if 
\begin{align}
    \bm{g}(\bm{x}_k^n)\bm{\pi}(\bm{x}_k^n)=-\beta \bm{g}(\bm{x}_k^n)\bm{g}^\intercal(\bm{x}_k^n)(\nabla V(\bm{f}(\bm{x}_k^n)))^\intercal
\end{align}
holds for all $t=0,\ldots,T-1$, $n=1,\ldots,N$. This immediately
gives explicit conditions for the gradient of $V$, e.g., if $\bm{g}(\bm{x}_k^n)$ has full rank, we have
\begin{align}
    \beta(\nabla V(\bm{f}(\bm{x}_k^n)))^\intercal=-\left(\bm{g}(\bm{x}_k^n)\bm{g}^\intercal(\bm{x}_k^n)\right)^{-1}\bm{g}(\bm{x}_k^n)\bm{\pi}(\bm{x}_k^n).
\end{align}
Therefore, this condition
 requires the existence of a function with specified 
gradient values, which can  be achieved with a continuous function
when its arguments are different. Since this is satisfied by assumption, i.e.,   $\bm{f}(\bm{x})\neq \bm{f}(\bm{x}')$ for all $\bm{x},\bm{x}'\in\pazocal{X}$, $\bm{x}\neq\bm{x}'$, a function $V$ 
generating trajectories $\tau^n$ using the policy \eqref{eq:clf-pol} 
is guaranteed to exist.
\end{proof}
Even though this finding demonstrates that the approximation is flexible enough to represent the training data, a possible optimality gap between $\bm{\pi}^*$ and $\hat{\bm{\pi}}$ remains. However, for uninformative priors in \eqref{eq:approx_loglik} and assuming the noiseless case $\tilde{\bm{\pi}} = \bm{\pi}^*$, it is possible to make the loss in \eqref{eq:hate} arbitrarily small, such that the the closed-form policy $\hat{\bm{\pi}}$ and optimal policy $\bm{\pi}^*$ agree on the training data. 
Since policies $\hat{\bm{\pi}}$ as defined in \eqref{eq:clf-pol} follow the gradient descent paradigm similarly as optimal policies, the gradient of a convex optimal value function exhibits the same direction as that of the learned CLF on the data. Hence, the level sets of the optimal value function $V^*$ and the CLF $V$ are similar in this case.

\subsection{Formulation as Constrained Kernel Regression Problem}\label{subsec:conkern}

While \eqref{eq:14} in combination with the approximate 
log-likelihood \eqref{eq:approx_loglik} is theoretically
appealing for inferring~$V$, it requires solving a functional
optimization problem, which is not tractable in practice. Therefore, expressive function approximators are needed. Due to the lack of structural knowledge regarding the value function $V$ of a human demonstrator, kernel-methods \cite{Hofmann2008} are suitable, since this class of machine learning techniques does not take a predefined structure, but instead constructs one solely using provided training data. To achieve this, they
rely on a kernel function \looseness=-1
\begin{equation}
	k \colon \pazocal{X} \times \pazocal{X} \to \mathbb{R}, \qquad (\bm{x}, \bm{x}') \mapsto k(\bm{x}, \bm{x}'),
\end{equation}
which defines a structure to the data by acting as a similarity measure between data points. 
Every kernel defines a so called 
reproducing kernel Hilbert space (RKHS)
\begin{align}
\label{eq:RKHS}
 \!\pazocal{H}_k\!=\!\! \Bigg\{ \!h\!\!:\hspace{0.13cm}&h(\cdot)\!=\!\sum\limits_{i=1}^N\!\alpha_i k(\cdot,\bm{z}_i),\left.\|h\|_k\!=\!\sum\limits_{i=1}^{\infty}\sum\limits_{j=1}^{\infty}\!\alpha_i\alpha_jk(\bm{z}_i,\bm{z}_j)\!<\!\infty \!\right\}\!\! ,   
\end{align} 
where $\alpha_i\in\mathbb{R}$ and $\bm{z}_i\in\pazocal{X}$. The RKHS comprises all the functions that can be approximated 
through the kernel. 

Since we cannot optimize over arbitrary functions, 
we restrict ourselves to a parameterization of continuous functions. Linear combinations
of universal kernels are particularly well-suited, since their spanned function space $\pazocal{H}_k$ 
is known to be dense in the continuous functions, thus, allowing to approximate continuous functions arbitrarily well \cite{Micchelli2006}.
Moreover, due to the analogy between certain kernel regression problems and Gaussian process regression \cite{Rasmussen2006}, the prior $P(V)$ can be 
intuitively defined, such that $\log(P(V))\propto -\|V\|_k$. 
This results in the constrained kernel regression problem \looseness=-1
\begin{subequations}
	\label{eq:kernelopt}
\begin{align}
	\label{eq:kernelopta}
	&\min_{V\in\pazocal{H}_k} &&  \sum_{n=1}^N\sum_{t=1}^T (\hat{\bm{e}}_t^n)^\intercal\bm{\Gamma}^{n}_t\hat{\bm{e}}_t^n +\lambda\|V\|_k^2, \\[3pt]
	\label{eq:kerneloptb}
	&\text{ s.t.} &&  \mathbb{E}\left[\Delta V\left(\bm{x}\right) \right]  < 0\quad	\text{for all}\enspace \bm{x}\in\pazocal{X}\setminus \{\bm{x}^*\},	\\
	\label{eq:kerneloptc}
	& &&  \mathbb{E}\left[\Delta V\left(\bm{x}^*\right) \right] = 0,
\end{align}
\end{subequations}
where $\lambda\in\mathbb{R}_+$ is a small constant. \looseness=-1

The stability constraint \eqref{eq:kerneloptb} remains a restriction
preventing the direct implementation of this optimization problem, as 
it must hold for an uncountable, infinite
set of states $\bm{x}$. This condition can only be resolved exactly in
simple problems, e.g., quadratic Lyapunov functions and linear 
systems. Therefore, we relax the stability condition and allow an
increase of the Lyapunov function along trajectories in a small neighborhood
$\pazocal{B}_\xi=\{\bm{x}\in\pazocal{X}: \|\bm{x}-\bm{x}^*\|\leq \xi \}$ of the equilibrium~$\bm{x}^*$, such that convergence of the 
closed-loop system trajectories to a neighborhood of the 
equilibrium is still guaranteed \cite{Umlauft2017a}. This allows us to approximate \eqref{eq:kernelopt} as 
a practically tractable optimization problem using a discretization
of the stability conditions, such that  
strong theoretical guarantees are retained as shown in the following theorem. \looseness=-1
\begin{theorem}\label{th:kernelopt}
Let $\pazocal{X}_\xi\!=\!\{\hat{\bm{x}}_1,\ldots,\hat{\bm{x}}_{N_\xi}\}$, $N_\xi\!\in\!\mathbb{N}_+$ be a discretization over $\pazocal{X}$ with 
grid constant $\xi\!\in\!\mathbb{R}_+$, i.e., $\xi\!=\!\max_{\bm{x}\in\pazocal{X}}\min_{\bm{x}'\in\pazocal{X}_\xi}\|\bm{x}\!-\!\bm{x}'\|$. Moreover, consider a continuous kernel $k$, continuous system dynamics $\bm{f}$ and $\bm{g}$ and let $L_{\Delta V}$ denote the Lipschitz constant of $\mathbb{E}\left[\Delta V\!\left(\bm{x}\right) \right]$. Then, any solution $V\!\in\!\pazocal{H}_k$ of the optimization problem 
\begin{subequations}
\label{eq:kernellagrange}
\begin{align}
	&\min_{V\in\pazocal{H}_k} &&  \sum_{n=1}^N\sum_{t=1}^T (\hat{\bm{e}}_t^n)^\intercal\bm{\Gamma}^{n}_t\hat{\bm{e}}_t^n +\lambda\|V\|_k^2, \\[3pt]
	& \text{ s.t.} &&  \mathbb{E}\left[\Delta V\left(\hat{\bm{x}}_i\right) \right]<-L_{\Delta V}\xi,\quad \forall \hat{\bm{x}}_i\in\pazocal{X}_\xi\setminus \{\hat{\bm{x}}^*\}\\
	&  &&\mathbb{E}\left[\Delta V\left(\hat{\bm{x}}^*\right) \right]\leq 0\\
	& &&\hat{\bm{x}}^*=\argmin_{\hat{\bm{x}}_i\in\pazocal{X}_\xi} V(\hat{\bm{x}}_i),
\end{align}
\end{subequations}
admits a representation of the form
\begin{align}
    V(\cdot) = \sum_{n=1}^N \alpha_n k(\bm{x}_n,\cdot)+\sum_{i=1}^{N_\xi}\alpha_{N+i}k(\hat{\bm{x}}_i,\cdot), 
    \label{eq:kernelsolution}
\end{align}
and satisfies condition \eqref{eq:14d} for system \eqref{eq:5} on all $\pazocal{X}\setminus \pazocal{B}_\xi$.
\end{theorem}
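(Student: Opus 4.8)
The statement bundles two claims: a representer-type characterization \eqref{eq:kernelsolution} of any minimizer, and a continuum decrease guarantee \eqref{eq:14d} valid on all of $\pazocal{X}\setminus\pazocal{B}_\xi$. The plan is to establish them separately, since they rest on different ideas — a reproducing-kernel projection for the first, and a Lipschitz covering estimate for the second.

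For the representation, I would run the classical representer-theorem argument. The objective in \eqref{eq:kernellagrange} depends on $V$ only through finitely many bounded linear functionals: the residuals $\hat{\bm{e}}_t^n$ enter via the policy \eqref{eq:clf-pol}, i.e. through gradient evaluations $\nabla V(\bm{f}(\bm{x}_{t-1}^n))$, while the constraints involve the increments $\mathbb{E}[\Delta V(\hat{\bm{x}}_i)]$, which are point evaluations of $V$ at the grid states $\hat{\bm{x}}_i$ and their successor states under \eqref{eq:5}. Because gradients appear, this is strictly a \emph{generalized} representer theorem: each functional is identified with its Riesz representer in $\pazocal{H}_k$, a kernel section $k(\bm{z},\cdot)$ for a point evaluation and a derivative section for a gradient evaluation, all centered at the data points $\bm{x}_n$ and grid points $\hat{\bm{x}}_i$ recorded in \eqref{eq:kernelsolution}. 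Decomposing any candidate as $V=V_\parallel+V_\perp$ with $V_\parallel$ the orthogonal projection onto the span of these representers, the reproducing property leaves every evaluation — hence the loss and all constraints — unchanged, whereas $\|V\|_k^2=\|V_\parallel\|_k^2+\|V_\perp\|_k^2\geq\|V_\parallel\|_k^2$. Thus a minimizer must have $V_\perp=0$ and lie in the finite-dimensional span, yielding \eqref{eq:kernelsolution}.

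For the stability guarantee the core is a Lipschitz covering estimate. The defining property of the grid constant provides, for every $\bm{x}\in\pazocal{X}$, a grid point $\hat{\bm{x}}_i\in\pazocal{X}_\xi$ with $\norm{\bm{x}-\hat{\bm{x}}_i}\leq\xi$. Using that $\bm{x}\mapsto\mathbb{E}[\Delta V(\bm{x})]$ is $L_{\Delta V}$-Lipschitz (a consequence of the continuity of $k$, $\bm{f}$ and $\bm{g}$ on the compact set $\pazocal{X}$, which also guarantees such a finite constant exists), I would bound
\[
\mathbb{E}[\Delta V(\bm{x})]\leq\mathbb{E}[\Delta V(\hat{\bm{x}}_i)]+L_{\Delta V}\norm{\bm{x}-\hat{\bm{x}}_i}\leq\mathbb{E}[\Delta V(\hat{\bm{x}}_i)]+L_{\Delta V}\xi.
\]
Whenever the nearest grid point satisfies $\hat{\bm{x}}_i\neq\hat{\bm{x}}^*$, the discretized strict constraint $\mathbb{E}[\Delta V(\hat{\bm{x}}_i)]<-L_{\Delta V}\xi$ renders the right-hand side negative, so \eqref{eq:14d} holds at $\bm{x}$.

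The remaining case — which I expect to be the main obstacle — is when the grid point closest to $\bm{x}$ is the designated minimizer $\hat{\bm{x}}^*$, where only $\mathbb{E}[\Delta V(\hat{\bm{x}}^*)]\leq 0$ is imposed and the bound above is inconclusive. Every such state lies within $\xi$ of $\hat{\bm{x}}^*$, so the argument only closes if all of these states already belong to the excluded ball $\pazocal{B}_\xi$. This forces a comparison between the grid minimizer $\hat{\bm{x}}^*$ and the center $\bm{x}^*$ of $\pazocal{B}_\xi$: I would argue that, because $V$ is Lyapunov-like with its minimum at the equilibrium, its grid minimizer $\hat{\bm{x}}^*$ tracks $\bm{x}^*$ up to the grid resolution, i.e. $\norm{\hat{\bm{x}}^*-\bm{x}^*}\leq\xi$, and then invoke the triangle inequality to confine the offending states to a neighborhood of $\bm{x}^*$. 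Making this quantitatively tight is the delicate point; it may require stating $\pazocal{B}_\xi$ with a mildly enlarged radius, or showing directly that no $\bm{x}\in\pazocal{X}\setminus\pazocal{B}_\xi$ has $\hat{\bm{x}}^*$ as its nearest grid point. Once this is resolved, combining the two cases establishes \eqref{eq:14d} on all of $\pazocal{X}\setminus\pazocal{B}_\xi$.
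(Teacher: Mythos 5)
Your route is the paper's route: a representer-theorem argument for the finite representation (the paper folds the constraints into the objective with Lagrange multipliers and cites the generalized representer theorem of Sch\"olkopf et al.), followed by exactly the covering estimate $\mathbb{E}[\Delta V(\bm{x})]\leq\mathbb{E}[\Delta V(\hat{\bm{x}}_i)]+L_{\Delta V}\norm{\bm{x}-\hat{\bm{x}}_i}$ for the continuum guarantee. So the approach matches; the useful content is in the two places where your extra care exposes loose ends.

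On the representation, your own observation undercuts your conclusion. You correctly note that the loss touches $V$ only through the gradient evaluations $\nabla V(\bm{f}(\bm{x}_{t-1}^n))$ entering $\hat{\bm{\pi}}$ in \eqref{eq:clf-pol}, and that the constraints touch $V$ at the grid points \emph{and at their successor states}. The Riesz representers of these functionals are derivative sections $\partial k(\bm{z},\cdot)/\partial z_j$ centered at $\bm{f}(\bm{x}_{t-1}^n)$ and kernel sections centered at successor states --- neither of which appears in \eqref{eq:kernelsolution}, which lists only plain sections at $\bm{x}_n$ and $\hat{\bm{x}}_i$. Your claim that all representers are ``centered at the points recorded in \eqref{eq:kernelsolution}'' therefore contradicts the sentence before it; the orthogonal-projection argument is sound but lands in a different (larger) span than the theorem states. (A further wrinkle you do not mention: the successor state itself depends on $V$ through $\hat{\bm{\pi}}$, so $\mathbb{E}[\Delta V(\hat{\bm{x}}_i)]$ is not even a linear functional of $V$, which strains the representer machinery further.) On the stability part, the exceptional case you flag is a genuine gap, not a failure of your argument: the states whose only $\xi$-close grid point is $\hat{\bm{x}}^*$ fill (up to intersection with $\pazocal{X}$) a $\xi$-ball around $\hat{\bm{x}}^*$, whereas $\pazocal{B}_\xi$ is centered at $\bm{x}^*$, and nothing in \eqref{eq:kernellagrange} forces $\hat{\bm{x}}^*=\bm{x}^*$; without adding $\bm{x}^*$ to the grid and showing it is the grid minimizer, the conclusion only holds outside a ball around $\hat{\bm{x}}^*$ (or a $2\xi$-ball around $\bm{x}^*$ once $\norm{\hat{\bm{x}}^*-\bm{x}^*}\leq\xi$ is argued). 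The paper's proof passes over this case silently. Finally, both you and the paper assert that continuity of $k$, $\bm{f}$, $\bm{g}$ on the compact $\pazocal{X}$ yields the Lipschitz constant $L_{\Delta V}$; continuity gives only uniform continuity, so one should instead assume Lipschitz (or differentiable) kernel and dynamics.
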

\begin{proof}
The constrained optimization problem \eqref{eq:kernellagrange} can be transformed into an unconstrained one by using Lagrange multipliers. Since the regularizer $\|V\|_k^2$ is strictly increasing on $[0,\infty[$, the obtained unconstrained regularized risk functional conforms to the requirements of the generalized representer theorem, which guarantees that a solution of the form \eqref{eq:kernelsolution} is admitted \cite{Scholkopf2001}. This proves 
the first part of the theorem. The second part follows from 
continuity of the kernel $k$, such that all functions of the
form \eqref{eq:kernelsolution} are continuous. Together with the continuity of $\bm{f}$ and $\bm{g}$ follows that $\Delta V$ is continuous, which immediately
implies the existence of a Lipschitz constant $L_{\Delta V}$
on $\pazocal{X}$. Therefore, tightening the stability constraint
\eqref{eq:14d} by $L_{\Delta V}\xi$ on the grid $\pazocal{X}_{\xi}\setminus\{\hat{\bm{x}}^*\}$ ensures that it is 
satisfied for all $\bm{x}\in\pazocal{X}\setminus\pazocal{B}_{\xi}$,
which concludes the proof. \looseness=-1
\end{proof}
In order to transform the stability conditions into tractable constraints, we adopt a discretization approach in \cref{th:kernelopt}, which is a commonly used method in the context 
of numerical analysis of non-parametric Lyapunov functions~\cite{Berkenkamp2016a, Lederer2019b}. It can be seen clearly
that this approach requires trading-off computational
complexity and flexibility of the feasible solutions, since small
values of $\xi$ result in a slight constraint tightening, 
but in turn cause a high computational complexity. This can be
exploited by employing multi-resolution grids, 
in which the grid constant is adapted to the training data 
and the dynamics, such that high flexibility is provided where 
required, while unnecessary computational effort is avoided. 
In order to further reduce the conservatism of the constraint
tightening, multi-resolution grids can be combined with local
Lipschitz constants, which can be efficiently computed 
for kernel methods using GPUs  \cite{Lederer2020d}.
\vspace{-0.025cm}
\section{Evaluation} \label{sec5}

For the evaluation of the proposed approach, we examine two scenarios. First, the principle capacity of the CLF to approximate an optimal value function is shown, by considering a linear-quadratic problem for which ground truth information is known. Secondly, the flexibility of the method is demonstrated by learning from human demonstrations in a goal-directed movement task. \looseness=-1

In the first problem, we consider a system governed by the linear dynamics $\bm{x}_{t+1} = \bm{A}\bm{x}_{t} + \bm{B}\bm{u}_t$ and define a quadratic running cost $c(\bm{x}_{t}, \bm{u}_t) = \bm{x}_{t}^\intercal\bm{Q}\bm{x}_{t} + \bm{u}_{t}^\intercal\bm{R}\bm{u}_{t}$, where
\begin{equation}
    \bm{A} \!= \!
    \begin{bmatrix}
    1  & 0.1 \\
    0  & 0.9 
\end{bmatrix}\!\!, \;
    \bm{B} \!= \!
    \begin{bmatrix}
    1  & 0 \\
    0  & 1 
\end{bmatrix}\!\!, \;
    \bm{Q} \!= \!
    \begin{bmatrix}
    1  & 0 \\
    0  & 0.5 
\end{bmatrix}\!\!, \;
    \bm{R} \!= \!
    \begin{bmatrix}
    15  & 0 \\
    0  & 15 
\end{bmatrix}. \nonumber
\end{equation}
Using the LQR it is straightforward to compute the optimal feedback control law and optimal value function, which takes a quadratic form here. To learn the CLF we observe the one step trajectories generated by the deterministic optimal feedback control, i.e., $\bm{\varSigma}=\bm{0}$, at $N=120$ points. 
The state space is bounded in $\pazocal{X} \in [-5, 5]^2$ and an $11\times11$ grid spanning equidistantly over the state space is used to ensure the stability constraint defined in \eqref{eq:kernellagrange}. The regularization factor is set to $\lambda = 1/\bar{N}$, where $\bar{N} = NT + N_{\xi}$ equals the total number of observed data points. The optimization is initialized with $\bm{\alpha} = \bm{0}$ and the control law is given by \eqref{eq:clf-pol} with $\beta = 0.2$.
\begin{figure}
    \centering
    \includegraphics{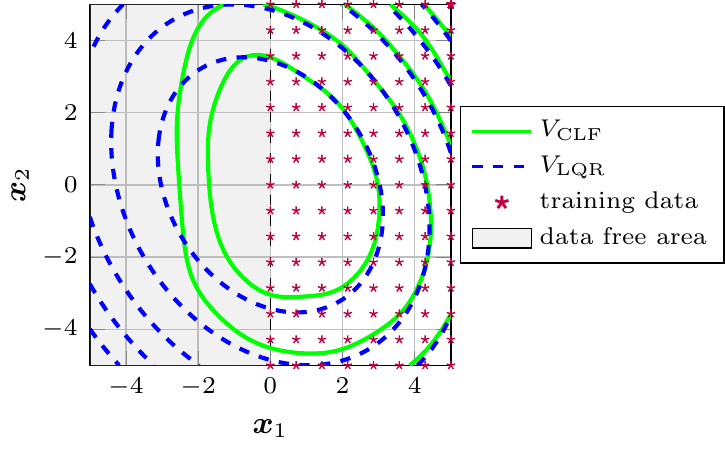}
    \vspace{-0.4cm}
    \caption{Contours of the ground truth value function illustrated in blue, dashed lines and the learned CLF given in green. The purple stars depict the training data, while the greyed out parts shows the data free area. \looseness=-1}
    \label{fig:LQR_comparison}
\end{figure}
In \cref{fig:LQR_comparison} the contour plots for the ground truth value function and the learned CLF are depicted. It can clearly be seen that both functions agree in the shape of their level curves in areas of the state space, where training data is provided. Thus, the learned Lyapunov-like function constitutes an appropriate approximation for the optimal value function. However, the approximation does not take a quadratic form in the data free area, since the kernel-based approach does not assume a predefined structure and requires data to build one. 
Therefore, our proposed method remains flexible in approximating the value function and does not limit the expressiveness of resulting control policies. We demonstrate this flexibility in the following learning from human demonstrations task. \looseness=-1

\begin{figure}
    \centering
    \includegraphics{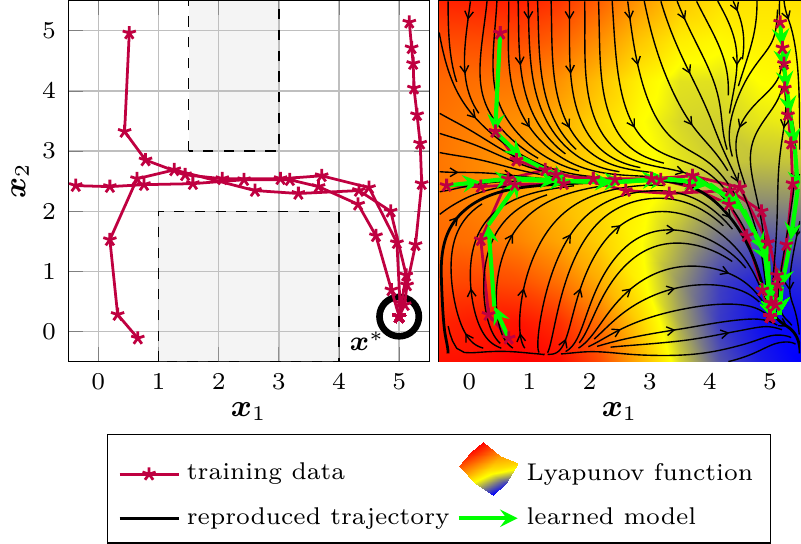}
    \vspace{-0.2cm}
    \caption{(Left) Depiction of the goal-directed movement task environment. The goal position $\bm{x}^\ast$ is indicated by the bold circle on the bottom right. The grey, transparent rectangles visualize areas associated with high costs. With purple lines the human demonstrations are depicted. (Right) Visualization of the kernel regression result for the goal-directed movement task. The learned Lyapunov function is visualized in form of a projected surface plot and as streamlines (black) along the gradient direction. The simulated trajectories under the inferred control law $\hat{\bm{\pi}}$ are given in green. }
    \label{fig:Env_CLF}
\end{figure}

The task is performed in a two-dimensional, continuous space, where the demonstrator attempts to reach a target position efficiently, whilst avoiding areas associated with high penalties. The environment and human generated demonstrations are depicted in \cref{fig:Env_CLF} on the left. Generally, this setup can be thought of as an abstraction of typical
programming by demonstration tasks, i.e., human reaching motions or lane tracking during driving.
The considered system is driven by the nonlinear dynamics
\begin{align}
    \bm{f}(\bm{x}) = 
    \begin{bmatrix}
        x_{1} \left(1 + s(x_{1}) \right)  \\
        {x}_{2} \left(1 + s({x}_{2})\cos^2{(\frac{\pi}{5}{x}_{2} } ) \right) 
    \end{bmatrix}
\end{align}
where  $s(x) = (2/(1+\exp(x-2.5)) - 1)\sin^2{(\pi{x}/5} )$ and $\bm{g}(\bm{x})=(2-\cos^2{(\pi{x_2}/5})\sin^2{(\pi{x_1}/5}))\bm{I}_2$, $\bm{x}=[x_1\ x_2]^\intercal$.

The state space is bounded in $\pazocal{X} \in [-0.5, 5.5]^2$ and a $10\times10$ grid spanning equidistantly over the state space is used. The number of observed trajectories is $N=4$ and we sample $T=11$ observations from each trajectory. 
For the computation of the expectation in \eqref{eq:kernellagrange} we draw and average over five points generated by a Gaussian distribution with standard deviation $\sigma = 0.05$, centered around the deterministic $V$. The regularization factor $\lambda$, the initialization of $\bm{\alpha}$ and the control factor $\beta$ stay the same as before.

From the results in \cref{fig:Env_CLF} it can clearly be seen that the proposed method is capable of simultaneously learning a Lyapunov function and an accompanying control law that drives any point in the bounded state space to the desired equilibrium point, i.e., the system is asymptotically stable. 
When comparing the reproduced trajectories generated by the control law \eqref{eq:clf-pol} with the training data, it becomes apparent that the inferred control policy reflects the depicted preferences in the human demonstrations.
This demonstrates that the learned Lyapunov function is flexible enough to encode the observed behavior in a similar fashion as a value function would. Furthermore, this is accomplished in a data efficient manner, since our method only requires $44$ human-generated data points for the presented results. \looseness=-1
\vspace{-0.025cm}
\section{Conclusion} \label{sec6}
This paper presents a novel perspective on the IRL inference problem, by using tools from dynamical system theory. Based on the concept of inverse optimality, the inference task is reformulated to learning CLFs from demonstrations. Through kernel-based non-parametric regression the CLFs are learned efficiently and with minimum structural assumptions. Finally, derived policies are certifiably stable and are shown to encode the preferences of the human demonstrator.

\addtolength{\textheight}{-12cm}   




\bibliographystyle{IEEEtran}
\bibliography{myBib}

\end{document}